\def\ps@pprintTitle{%
 \let\@oddhead\@empty
 \let\@evenhead\@empty
 \def\@oddfoot{\centerline{\thepage}}%
 \let\@evenfoot\@oddfoot}
\tikzset{block/.style={draw,thick,text width=1.5cm,minimum height=1cm,align=center},
line/.style={-latex} }
\newcolumntype{M}{>{\centering\arraybackslash}m{1cm}}
\theoremstyle{definition}
\newtheorem{lem}{Lemma}
\newtheorem{thm}{Theorem}
\begin{document}
\begin{frontmatter}
\title{Recovering Quantized Data with Missing Information Using Bilinear Factorization and Augmented Lagrangian Method}
\author[mymainaddress]{Ashkan~Esmaeili}
\ead{aesmaili@stanford.edu}
\author[mymainaddress]{Kayhan~Behdin}
\ead{behdin\_k@ee.sharif.edu}
\author[mymainaddress]{Sina~Al-E-Mohammad}
\ead{alemohammad\_s@ee.sharif.edu}
\author[mymainaddress]{Farokh~Marvasti\corref{mycorrespondingauthor}}
\ead{fmarvasti@gmail.com}
\cortext[mycorrespondingauthor]{Corresponding author}
\address[mymainaddress]{Advanced Communications Research Institute (ACRI), and\\Electrical Engineering Department, Sharif University of Technology, Tehran, Iran}
\begin{abstract}
\quad In this paper, we propose a novel approach in order to recover a quantized matrix with missing information. We propose a regularized convex cost function composed of a log-likelihood term and a Trace norm term. The Bi-factorization approach and the Augmented Lagrangian Method (ALM) are applied to find the global minimizer of the cost function in order to recover the genuine data. We provide mathematical convergence analysis for our proposed algorithm.
In the Numerical Experiments Section, we show the superiority of our method in accuracy and also its robustness in computational complexity compared to the state-of-the-art literature methods.
\end{abstract}
\begin{keyword}
Matrix Completion, Quantized Matrix, Missing Information, Bilinear Factorization, Augmented Lagrangian Multiplier (ALM) Method
\end{keyword}
\end{frontmatter}
\linenumbers
\section{Introduction}\label{Intro}
Matrix Completion (MC) is of interest in many applications and practical settings. In recent years, theoretical advancements and achievements were reached by many authors in MC \cite{candes2009exact,cai2010singular, keshavan2010matrix}. MC has been utilized in a wide range of applications including but not limited to collaborative filtering \cite{koren2009matrix}, sensor networks \cite{biswas2004semidefinite}, prediction, and learning \cite{esmaeili2018transduction}.\par
In this paper, our goal is to recover a matrix of quantized data with missing information (Quantized MC), i.e., some entries of a given matrix are not assigned (NA), and the rest take quantized values rather than continuous values. This problem model has been taken into account by many authors. Quantized MC is found in many practical applications including but not limited to collaborative filtering, learning and content analytics, and sensor network localization \cite{bhaskar}.
\par
A special case of the quantized MC problem is where the quantization is restricted to only $1$ bit, i.e., the observed matrix is a signed version of an oracle continuous-valued matrix. In \cite{davenport20141}, one-bit matrix completion is investigated and the authors propose a convex programming setting to maximize a log-likelihood function to recover the genuine continuous-valued data. In \cite{cai2013max}, a max-norm constrained maximum likelihood estimate is studied. The max-norm is considered as the rank convex relaxation. In \cite{ni2016optimal}, a rank constrained maximum likelihood
estimation is considered, and a greedy algorithm is proposed as an extension of conditional gradient descent, which converges at a linear rate. 
\par
Other works extend the one-bit matrix completion to the general quantized MC such as \cite{lan2014matrix}. In 
\cite{lan2014matrix}, the authors propose the \textbf{Q-MC} method. This method applies a projected gradient based approach in order to minimize a constrained log-likelihood function. The projection step assumes the matrix is included in a convex ball determined by the tuning parameter $\lambda$ (endorsing the Trace norm reduction). \par
In \cite{bhaskar}, a novel method for quantized matrix completion is introduced, where a log-likelihood function is minimized under an exact rank constraint. The rank constraint appears in the assumed dimensions of two factor matrices forming the target matrix. This is a robust method leading to noticeable numerical results in matrix recovery. However, the proposed method in \cite{bhaskar} requires an initial accurate rank estimation, while our proposed method (to be elaborated later) does not require any exact rank estimation at the beginning.
\par
We assume the original data matrix to be low-rank, and consider the trace norm as its convex relaxation. We penalize the log-likelihood function with the trace norm (rank convex surrogate), and leverage the biliniear factorization to solve the regularized log-likelihood function \cite{bilinear}. In our work, the Augmented Lagrangian Method (ALM) is utilized. The alternating gradient descent approach is applied to the two factors to reach the global minimizer of the proposed convex problem.\\
 We have conducted our algorithm in two simulation scenarios (synthetic and Real) and compared our performance in terms of accuracy and computational complexity with other state-of-the-art methods. We will elaborate the analysis in the Simulations Results Section \ref{Discuss}. \par
The rest of the paper is organized as follows: Section \ref{pm} includes the problem model. In Section \ref{proposed}, we propose our algorithms. Next, Section \ref{TA} contains the theoretical convergence analysis. In Section \ref{NE}, we discuss the numerical experiments. In the final Section \ref{conc}, we conclude the paper.
\section{Problem Model}\label{pm}
In this part, we introduce the notations and the mathematical model. Suppose $\boldsymbol{X}_0\in\mathbb{R}^{m\times n}$ is a low rank matrix which we have access to quantized observations for some of its entries. Our goal is to recover unobserved entries of $\boldsymbol{X}_0$ as well as enhancing observed ones. Formally, we define the set of observed entries as:
$$\Omega=\{(i,j):(\boldsymbol{X}_0)_{ij} \,\text{has been observed}\}$$,
where $\boldsymbol{X}_{ij}$ denotes the $(i,j)$th entry of matrix $\boldsymbol{X}$. In addition, $\boldsymbol{Y}\in\mathbb{R}^{m\times n}$ denotes the quantized measurements matrix with zeros in locations that have not been observed. For $(i,j)\in\Omega$, we have
$$\boldsymbol{Y}_{ij}=Q((\boldsymbol{X}_0)_{ij}+ \boldsymbol{\epsilon}_{ij})$$,
where $Q(.)$ is the quantization rule based on the quantization levels and $\boldsymbol{\epsilon}$ is the noise matrix with i.i.d entries which have been drawn from the logistic distribution with cumulative distribution function of $\Phi(x)$ \cite{bhaskar}, where
\begin{equation}
\Phi(x)=\frac{1}{1+\exp(-x)}.
\end{equation}
To denoise the observed entries, we exploit the maximum likelihood estimation for the logistic distribution. Let $f_{ij}(x)$ denote the following likelihood function on $x_{ij}$. We have 
\begin{equation}
f_{ij}(x)=\Phi(U_{ij}-x)-\Phi(L_{ij}-x)
\end{equation}
Here, $U_{ij},L_{ij}$ are the upper and lower quantization bounds of the $ij$-th entry of the quantized matrix $\boldsymbol{Y}$. Multiplying likelihood functions for the observed entries, our log-likelihood function is defined as follows:
\begin{equation}\label{loglike}
F_{\boldsymbol{Y}}(\boldsymbol{X})=\sum_{(i,j) \in \Omega}\log f_{ij}(\boldsymbol{X}_{ij}),
\end{equation}
which is a function of the observation matrix $\boldsymbol{Y}$ and the test matrix $\boldsymbol{X}$. This log-likelihood function is, however, independent of unobserved entries, therefore, we exploit the low-rank structure of the initial matrix to recover them. Here, our assumption is that the initial matrix $\boldsymbol{X}_0$, from which the observation matrix $\boldsymbol{Y}$ has been produced, is low rank. Hence, $r^*=\text{rank}(\boldsymbol{X}_0)$ is small compared to $m$ and $n$. Therefore, we can write the desired optimization problem as a log-likelihood maximization under a low-rank constraint as follows:
\begin{align}\label{p4}
\nonumber \underset{\boldsymbol{X}}{\min} ~ -F_{\boldsymbol{Y}}(\boldsymbol{X})\\
s.t. ~  \text{rank}(\boldsymbol{X})\leq r^*
\end{align} \\
Since the rank constraint is not convex and generally difficult to handle, our purpose in the rest of this section is to provide a tractable relaxation of this problem. To do this, unlike \cite{bhaskar} where the authors are restricted to have an exact rank estimation, we aim to consider the penalized problem. This leads us to use the convex surrogate of the rank function (trace norm), denoted as $\|.\|_*$, and as a result, we take advantage of the convexity of the regularized cost function. We first modify the problem in \ref{p4} as follows:
\begin{align}\label{p5}
\nonumber \underset{\boldsymbol{X}}{\min} ~ -F_{\boldsymbol{Y}}(\boldsymbol{X})\\
s.t. ~  \|\boldsymbol{X}\|_*\leq k.
\end{align} 
Although this modified problem is convex, we still tend to have a problem without constaints to solve by generic convex optimization methods. Therefore, we regularize this constrained problem by adding the constraint as a penalty term to the main cost function. Hence, the following unconstrained problem is obtained: 
\begin{equation} \label{p6}
\underset{\boldsymbol{X}}{\min}\quad G(\boldsymbol{X},\lambda),
\end{equation}
where $G(\boldsymbol{X},\lambda):= -F_{\boldsymbol{Y}}(\boldsymbol{X})+\lambda \|\boldsymbol{X}\|_*$. The cost function here consists of two terms. The first term assures the solution is consistent with the quantized observations w.r.t the log-likelihood criterion. The second term controls how low rank the result is by trying to reduce the trace norm. The parameter $\lambda$ determines how effective each term in the cost function is. For example, if $\lambda$ is increased, we expect to obtain a low rank matrix compared to when $\lambda$ is smaller. We will later prove this result formally, but for now, we just rely on this insight and assume $\lambda$ is chosen properly. Thus, the solution satisfies the constraint $\|X\|_*\leq k$.   \\
\section{The Proposed Algorithm}\label{proposed}
In this section, we propose a novel algorithm to solve problem (\ref{p6}). Contrary to \cite{bhaskar,lan2014matrix} which apply the gradient descent method (and projection to handle constraints) directly to their proposed problems, we modify our cost function to obtain a differentiable form which will be easier to handle. To this end, we utilize the idea of bilinear factorization for trace norm minimization as introduced in \cite{bilinear}. In this approach, we decompose $\boldsymbol{X}$ as $\boldsymbol{X}=\boldsymbol{U}\boldsymbol{V}^T$ where $\boldsymbol{U}\in\mathbb{R}^{m\times r}$, $\boldsymbol{V}\in\mathbb{R}^{n\times r}$, and $r\leq\min(m,n)$ is an estimation of the rank of the solution. Therefore, our problem can be reformulated as \cite{bilinear}
\begin{align}\label{main}
\nonumber & \underset{\boldsymbol{X},\boldsymbol{U},\boldsymbol{V}}{\min} ~ -F_{\boldsymbol{Y}}(\boldsymbol{X})+\frac{\lambda}{2}(\|\boldsymbol{U}\|_F^2+\|\boldsymbol{V}\|_F^2)\\
& s.t. ~  \boldsymbol{X}=\boldsymbol{U}\boldsymbol{V}^T.
\end{align} 
We leverage the ALM approach as in \cite{bilinear} to solve problem (\ref{main}). By analyzing subproblems brought up by ALM and substituting closed form solutions for subproblems with such solutions, the overall procedure can be described as in Algorithm \ref{Algorithm 1}. In this algorithm, $\mathcal{U}$ denotes the uniform distribution and matrices $\boldsymbol{U}$ and $\boldsymbol{V}$ are initialized randomly because no better estimation of their final value can be obtained easily. Note that the subproblem 
$$\min_{\boldsymbol{X}}-F_{\boldsymbol{Y}}(\boldsymbol{X})+\frac{\rho}{2}\|\boldsymbol{X}-(\boldsymbol{U}\boldsymbol{V}^T-\frac{\boldsymbol{\Lambda}}{\rho})\|_F^2$$
is convex and differentiable and can be solved by many different methods, like the gradient method which we use in our simulations. 
\begin{algorithm}[h!] 
	\small
	\caption{}\label{Algorithm 1}
	\begin{algorithmic}[1]
		\State Input:
		\State Observation matrix $\boldsymbol{Y}$ 
		\State The set of observed indices $\Omega$
		\State The quantization lower and upper bounds $U_{ij}, L_{ij}$
		\State The regularization factor $\lambda$
		\State The ALM penalty $\rho$
		\State Initial estimation of the rank $r$
		\State output:
		\State The completed matrix $\boldsymbol{X}^*$
		\Procedure {QMC-BIF}{$\boldsymbol{Y}, \Omega, U_{ij}, L_{ij}, \lambda, ,r,\rho$}
		\State $\boldsymbol{U} \sim\mathcal{U}([0,1]^{m\times r})$
		\State $\boldsymbol{V} \sim \mathcal{U}([0,1]^{m\times r})$
		\State $\boldsymbol{Z} \leftarrow \boldsymbol{Y}$
		\State $ \boldsymbol{\Lambda} = \boldsymbol{0}_{mn}$
		\While {not converged}
		\While {not converged}
		\State $\boldsymbol{U} \leftarrow (\rho \boldsymbol{Z}+\boldsymbol{\Lambda})\boldsymbol{V}(\rho \boldsymbol{V}^T\boldsymbol{V}+\lambda \boldsymbol{I}_r)^{-1}$
		\State $\boldsymbol{V} \leftarrow (\rho \boldsymbol{Z}+\boldsymbol{\Lambda})^T\boldsymbol{U}(\rho \boldsymbol{U}^T\boldsymbol{U}+\lambda \boldsymbol{I}_r)^{-1}$
		\EndWhile
		\State $\boldsymbol{\Lambda} \leftarrow \boldsymbol{\Lambda} + \rho(\boldsymbol{Z}-\boldsymbol{UV}^T)$
		\State $\boldsymbol{Z} \leftarrow \textrm{argmin}_{\boldsymbol{X}}{-F_{\boldsymbol{Y}}(\boldsymbol{X})}+\frac{\rho}{2}\|\boldsymbol{X}-(\boldsymbol{UV}^T-\frac{\boldsymbol{\Lambda}}{\rho})\|_F^2$
		\EndWhile 
		\State
		\Return $\boldsymbol{X}^* \leftarrow \boldsymbol{Z}$
		\EndProcedure	
	\end{algorithmic} 
\end{algorithm} 
The advantage of our algorithm to the one in \cite{bhaskar} is that Theorem 1 in \cite{bilinear} states that any rank estimation satisfying $r\geq r^*$ leads to solutions that are also solutions of (\ref{p6}), removing the need for an accurate rank estimation. It is needless to say that no projection is needed in our method in contrast to \cite{bhaskar,lan2014matrix}.
\section{Theoretical Analysis}\label{TA}
The problems represented in (\ref{p5}) and (\ref{p6}) are obviously different. There is a constraint in problem (\ref{p5}) guaranteeing that $\|\boldsymbol{X}\|_*\leq k$. In (\ref{p6}) however, as we discussed earlier, the parameter $\lambda$ controls how low-rank the minimizer is. In this section, we analyze the effect of replacing the constraint with a penalty term in our problem and also the appropriate value for $\lambda$. First, we add an assumptions to our problem.
\\
\textbf{A}: $\boldsymbol{X}_0$ is a solution of problem (\ref{p5}). \\
This assumption assures the observations are enough to recover the initial matrix because if this assumption is violated, the ideal maximum likelihood estimation can not recover the information lost, therefore, we can not guarantee the data is recoverable by our method which is a penalized version of the maximum likelihood estimation. Based on this explanation, assumption (A) is reasonable. \\
  Let
$$S^*(\lambda)=\arg\underset{\boldsymbol{X}}{\min}~ G(\boldsymbol{X},\lambda) =\arg\underset{\boldsymbol{X}}{\min} -F_{\boldsymbol{Y}}(\boldsymbol{X})+\lambda \|\boldsymbol{X}\|_*.$$
The set $S^*(\lambda)$ contains all global solutions of the problem in (\ref{p6}) for a fixed value of $\lambda$. It is easy to observe this set is bounded, i.e. 
$r(\lambda):=\sup_{\boldsymbol{X}\in S^*(\lambda)}\|\boldsymbol{X}\|_*<\infty$.
Our expectation of the effect of $\lambda$ can be formally stated as $r(\lambda)$ is a decreasing function of $\lambda$. This result is proven in the following lemma.
\begin{lem}\label{lemma1}
	The function $r(\lambda)=\sup_{\boldsymbol{X}\in S^*(\lambda)}\|\boldsymbol{X}\|_*$ is decreasing over $\lambda$.
\end{lem}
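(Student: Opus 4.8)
The plan is to compare minimizers coming from two different regularization levels and to exploit that the log-likelihood term $-F_{\boldsymbol{Y}}$ is independent of $\lambda$. First I would fix $\lambda_1 < \lambda_2$ and choose arbitrary minimizers $\boldsymbol{X}_1 \in S^*(\lambda_1)$ and $\boldsymbol{X}_2 \in S^*(\lambda_2)$; both sets are nonempty and the suprema are finite as already noted in the text. The goal is to show $\|\boldsymbol{X}_1\|_* \geq \|\boldsymbol{X}_2\|_*$ for every such pair, after which the monotonicity of $r$ follows by passing to the supremum.

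The core of the argument is the standard two-sided optimality trick. Since $\boldsymbol{X}_1$ minimizes $G(\cdot,\lambda_1)$ and $\boldsymbol{X}_2$ minimizes $G(\cdot,\lambda_2)$, I would write
\begin{align}
-F_{\boldsymbol{Y}}(\boldsymbol{X}_1) + \lambda_1 \|\boldsymbol{X}_1\|_* &\leq -F_{\boldsymbol{Y}}(\boldsymbol{X}_2) + \lambda_1 \|\boldsymbol{X}_2\|_*, \\
-F_{\boldsymbol{Y}}(\boldsymbol{X}_2) + \lambda_2 \|\boldsymbol{X}_2\|_* &\leq -F_{\boldsymbol{Y}}(\boldsymbol{X}_1) + \lambda_2 \|\boldsymbol{X}_1\|_*.
\end{align}
Adding these two inequalities, the $-F_{\boldsymbol{Y}}$ contributions cancel (precisely because this term carries no $\lambda$), leaving $(\lambda_1-\lambda_2)\big(\|\boldsymbol{X}_1\|_* - \|\boldsymbol{X}_2\|_*\big) \leq 0$. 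Because $\lambda_1 - \lambda_2 < 0$, dividing by this negative quantity reverses the inequality and gives $\|\boldsymbol{X}_1\|_* \geq \|\boldsymbol{X}_2\|_*$. Notably this step uses only optimality and the additive structure of $G$, so convexity of the two terms is not even strictly required here.

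The step that needs the most care, and which I expect to be the main obstacle, is upgrading this pairwise comparison of individual minimizers to a comparison of the two suprema $r(\lambda_1)$ and $r(\lambda_2)$, since $S^*(\lambda)$ need not be a singleton. The inequality $\|\boldsymbol{X}_1\|_* \geq \|\boldsymbol{X}_2\|_*$ holds uniformly over all admissible pairs, so I would first fix $\boldsymbol{X}_2 \in S^*(\lambda_2)$ and take the supremum over $\boldsymbol{X}_1 \in S^*(\lambda_1)$ to obtain $r(\lambda_1) \geq \|\boldsymbol{X}_2\|_*$, and then take the supremum over $\boldsymbol{X}_2 \in S^*(\lambda_2)$ to conclude $r(\lambda_1) \geq r(\lambda_2)$. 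Since $\lambda_1 < \lambda_2$ were arbitrary, this establishes that $r$ is (weakly) decreasing, as claimed; no attainment of the suprema is required because the uniform pairwise bound survives the two successive supremum operations.
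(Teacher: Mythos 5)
Your proposal is correct and follows essentially the same argument as the paper: the two-sided optimality inequalities for $G(\cdot,\lambda_1)$ and $G(\cdot,\lambda_2)$, added so the $-F_{\boldsymbol{Y}}$ terms cancel, yielding the pairwise comparison of trace norms for arbitrary minimizers. Your only addition is spelling out the passage to the two suprema, which the paper leaves implicit; this is a minor tightening, not a different route.
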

\begin{proof}
Assume $\lambda_1>\lambda_2$. In addition, assume $\boldsymbol{X}^*(\lambda_1)$ and $\boldsymbol{X}^*(\lambda_2)$ are chosen arbitrarily from $S^*(\lambda_1)$ and $S^*(\lambda_2)$, respectively. Using the minimizer definition for the problem in \ref{p6}, we have:  
\begin{equation}\label{eq6}
\lambda_1\|\boldsymbol{X}^*(\lambda_1)\|_*-F_{\boldsymbol{Y}}(\boldsymbol{X}^*(\lambda_1))\leq \lambda_1\|\boldsymbol{X}^*(\lambda_2)\|_*-F_{\boldsymbol{Y}}(\boldsymbol{X}^*(\lambda_2))
\end{equation}
\begin{equation}\label{eq7}
\lambda_2\|\boldsymbol{X}^*(\lambda_2)\|_*-F_{\boldsymbol{Y}}(\boldsymbol{X}^*(\lambda_2))\leq \lambda_2\|\boldsymbol{X}^*(\lambda_1)\|_*-F_{\boldsymbol{Y}}(\boldsymbol{X}^*(\lambda_1))
\end{equation}
By adding (\ref{eq6}) and (\ref{eq7}), the following inequality immediately follows:
\begin{align}
\lambda_1\|\boldsymbol{X}^*(\lambda_1)\|_*+\lambda_2\|\boldsymbol{X}^*(\lambda_2)\|_*\leq \lambda_1\|\boldsymbol{X}^*(\lambda_2)\|_*+\lambda_2\|\boldsymbol{X}^*(\lambda_1)\|_*
\end{align}
\begin{align}
 \Rightarrow(\lambda_1-\lambda_2)\|\boldsymbol{X}^*(\lambda_1)\|_*\leq(\lambda_1-\lambda_2)
 \|\boldsymbol{X}^*(\lambda_2)\|_*
\end{align}
Since $\lambda_1-\lambda_2>0$, $\|\boldsymbol{X}^*(\lambda_1)\|_*\leq \|\boldsymbol{X}^*(\lambda_2)\|_*$ and since this inequality holds for every two arbitrary matrices from $S^*(\lambda_1)$ and $S^*(\lambda_2)$, the desired result is deduced. 
\end{proof}
Lemma \ref{lemma1} proves our insight about $\lambda$. However, it does not provide any suggestion about the appropriate value of $\lambda$. Before we proceed, we prove a lemma to characterize $S^*(\lambda)$ which will be used later.
\begin{lem}\label{equality}
	For $\boldsymbol{X}\in S^*(\lambda)$ for a fixed $\lambda$, the value $\boldsymbol{X}_{ij}$ is equal to some constant $c_{ij}$ for $(i,j)\in\Omega$. 
\end{lem}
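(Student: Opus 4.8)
The plan is to exploit convexity of the objective $G(\boldsymbol{X},\lambda)=-F_{\boldsymbol{Y}}(\boldsymbol{X})+\lambda\|\boldsymbol{X}\|_*$ together with \emph{strict} convexity of the data-fidelity term restricted to each observed coordinate. First I would record that $G$ is convex: the trace norm is a norm, hence convex, and $-F_{\boldsymbol{Y}}(\boldsymbol{X})=\sum_{(i,j)\in\Omega}-\log f_{ij}(\boldsymbol{X}_{ij})$ is a sum of convex functions because the logistic likelihood $f_{ij}(x)=\Phi(U_{ij}-x)-\Phi(L_{ij}-x)$ is log-concave. The crucial strengthening I need is that each scalar map $x\mapsto-\log f_{ij}(x)$ is in fact \emph{strictly} convex; I address this separately below.

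Granting strict convexity of the scalar maps, the argument proceeds by a standard ``equality in Jensen'' device. Fix $\lambda$ and take any two minimizers $\boldsymbol{X}_1,\boldsymbol{X}_2\in S^*(\lambda)$, both attaining the optimal value $G^*$. Let $\boldsymbol{M}=\tfrac12(\boldsymbol{X}_1+\boldsymbol{X}_2)$. Convexity of $G$ gives $G(\boldsymbol{M})\le\tfrac12 G(\boldsymbol{X}_1)+\tfrac12 G(\boldsymbol{X}_2)=G^*$, while optimality gives $G(\boldsymbol{M})\ge G^*$; hence equality holds. Writing $G=A+B$ with $A=-F_{\boldsymbol{Y}}$ and $B=\lambda\|\cdot\|_*$, both convex, the midpoint inequalities $A(\boldsymbol{M})\le\tfrac12(A(\boldsymbol{X}_1)+A(\boldsymbol{X}_2))$ and $B(\boldsymbol{M})\le\tfrac12(B(\boldsymbol{X}_1)+B(\boldsymbol{X}_2))$ must \emph{both} be equalities, since their sum is forced to be an equality.

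Next I would push the equality for $A$ down to each coordinate. Since $A(\boldsymbol{X})=\sum_{(i,j)\in\Omega}-\log f_{ij}(\boldsymbol{X}_{ij})$ is a sum of convex scalar terms and $\boldsymbol{M}_{ij}=\tfrac12((\boldsymbol{X}_1)_{ij}+(\boldsymbol{X}_2)_{ij})$, the coordinatewise midpoint inequalities again sum to the established equality, so each individual term must itself satisfy equality:
$$-\log f_{ij}\!\left(\tfrac{(\boldsymbol{X}_1)_{ij}+(\boldsymbol{X}_2)_{ij}}{2}\right)=\tfrac12\big[-\log f_{ij}((\boldsymbol{X}_1)_{ij})-\log f_{ij}((\boldsymbol{X}_2)_{ij})\big].$$
Strict convexity of $x\mapsto-\log f_{ij}(x)$ then forbids this equality unless $(\boldsymbol{X}_1)_{ij}=(\boldsymbol{X}_2)_{ij}$. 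As $\boldsymbol{X}_1,\boldsymbol{X}_2$ were arbitrary in $S^*(\lambda)$, the common value $c_{ij}:=(\boldsymbol{X}_1)_{ij}$ is well defined for every $(i,j)\in\Omega$, which is the claim.

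The main obstacle is the strict convexity of $-\log f_{ij}$, equivalently the strict log-concavity of $f_{ij}$. I would verify this by showing $(\log f_{ij})''<0$. The logistic density $\phi=\Phi'$ satisfies $\log\phi(x)=-x-2\log(1+e^{-x})$, so $(\log\phi)''(x)=-2\phi(x)<0$, i.e. $\phi$ is strictly log-concave. Strict log-concavity of the density transfers to the interval probability $f_{ij}(x)=\int_{L_{ij}-x}^{U_{ij}-x}\phi(t)\,dt$ by the Pr\'ekopa-type result that convolving a strictly log-concave density with the (log-concave) indicator of $[L_{ij},U_{ij}]$ yields a strictly log-concave function; alternatively one can verify $(f'_{ij})^2>f''_{ij}f_{ij}$ directly from $f'_{ij}(x)=\phi(L_{ij}-x)-\phi(U_{ij}-x)$ together with $\phi'=\phi\cdot(1-2\Phi)$. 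This is the only place where the specific logistic model enters, and it is where I expect the real work to lie.
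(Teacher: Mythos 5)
Your proof is correct, and its skeleton is the same as the paper's: take two minimizers, form a convex combination, and play convexity of the trace norm against strict convexity of each scalar term $x\mapsto -\log f_{ij}(x)$ to force agreement on the observed entries. Where you genuinely diverge is in how that curvature is obtained and used. The paper first notes that $S^*(\lambda)$ is bounded, so each observed entry lies in a compact interval $[l_{ij},u_{ij}]$, and then \emph{asserts without verification} a strong-convexity bound $\frac{d^2}{dx^2}\bigl(-\log f_{ij}(x)\bigr)>d_{ij}>0$ on that interval; this yields the quantitative penalty $\frac12 d\,\alpha(1-\alpha)\sum_{(i,j)\in\Omega}\bigl((\boldsymbol{X}_1)_{ij}-(\boldsymbol{X}_2)_{ij}\bigr)^2$ in the convexity inequality, contradicting optimality unless the sum vanishes. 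You instead use the qualitative equality-in-Jensen device at the midpoint, which needs only strict convexity and makes no appeal to boundedness of $S^*(\lambda)$; in exchange you must actually establish strict convexity, and you correctly flag this as where the real work lies. Your verification goes through: $(\log\phi)''=-2\phi<0$, and your proposed direct check of $(f_{ij}')^2>f_{ij}''f_{ij}$ indeed closes, since writing $p=\Phi(U_{ij}-x)$, $q=\Phi(L_{ij}-x)$ and using $\phi=\Phi(1-\Phi)$, $\phi'=\phi(1-2\Phi)$, the inequality reduces after simplification to $p^2+q^2<p+q$, which holds strictly for $p,q\in(0,1)$. (This computation also shows the curvature degenerates as $x\to\pm\infty$, which is exactly why the paper's uniform lower bound $d_{ij}$ requires the compactness it invokes.) So your route is self-contained precisely where the paper's is not, at the cost of the logistic-specific calculation; the paper's strong-convexity formulation buys a quantitative decrease it never needs beyond mere positivity.
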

\begin{proof}
	If $S^*(\lambda)$ is a singleton, the result is trivial. Therefore, assume there exists $\boldsymbol{X}_1$ and $\boldsymbol{X}_2$ in $S^*(\lambda)$ such that $\boldsymbol{X}_1\neq\boldsymbol{X}_2$. It is easy to observe $S^*(\lambda)$ is bounded and each observed entry of $\boldsymbol{X}$ can be bounded as $l_{ij}\leq \boldsymbol{X}_{ij}\leq u_{ij}$, $(i,j)\in\Omega$, for $\boldsymbol{X}\in S^*(\lambda)$. Therefore, there exist positive constants $d_{ij}>0$ such that  $\frac{d^2}{dx^2}(-\log f_{ij}(x))>d_{ij}$ for $l_{ij}\leq x\leq u_{ij}$. As a result, $f_{ij}$ is strongly (and strictly) convex in $S^*(\lambda)$ for $(i,j)\in\Omega$ or equivalently,
	$$-\log f_{ij}(\alpha x + (1-\alpha)y)\leq -\alpha \log f_{ij}(x) -(1-\alpha) \log f_{ij}(y)-\frac{1}{2}d_{ij}\alpha(1-\alpha)(x-y)^2$$
	for $l_{ij}\leq x,y\leq u_{ij}$ and $0\leq\alpha\leq 1$.
	\\
	Define $\boldsymbol{X}_\alpha=\alpha \boldsymbol{X}_1+(1-\alpha)\boldsymbol{X}_2$ for $\alpha\in(0,1)$. Since problem (\ref{p6}) is convex, $S^*(\lambda)$ is a convex set and $\boldsymbol{X}_{\alpha}\in S^*(\lambda)$. Therefore, one can write
	\begin{align*}
	-F_{\boldsymbol{Y}}(\boldsymbol{X}_\alpha)& =\sum_{(i,j)\in\Omega} -\log f_{ij}((\boldsymbol{X}_\alpha)_{ij})=\sum_{(i,j)\in\Omega} -\log f_{ij}(\alpha(\boldsymbol{X}_1)_{ij}+(1-\alpha)(\boldsymbol{X}_2)_{ij})\leq\\
	&\sum_{(i,j)\in\Omega} [-\alpha\log f_{ij} ((\boldsymbol{X}_1)_{ij})   -(1-\alpha)\log f_{ij} ((\boldsymbol{X}_2)_{ij})  -\frac{1}{2}d_{ij}\alpha(1-\alpha)((\boldsymbol{X}_1)_{ij}-(\boldsymbol{X}_2)_{ij})^2]\leq \\
	& -\alpha F_{\boldsymbol{Y}}(\boldsymbol{X}_1)-(1-\alpha) F_{\boldsymbol{Y}}(\boldsymbol{X}_2)-\frac{1}{2}d\alpha(1-\alpha)\sum_{(i,j)\in\Omega} ((\boldsymbol{X}_1)_{ij}-(\boldsymbol{X}_2)_{ij})^2
	\end{align*}
	where $d=\min d_{ij}>0$. We know that trace norm is convex and therefore, $\|\boldsymbol{X}_{\alpha}\|_*\leq \alpha\|\boldsymbol{X}_1\|_*+(1-\alpha)\|\boldsymbol{X}_2\|_*$. Overall, we have 
	\begin{align*}
	G(\boldsymbol{X}_{\alpha},\lambda)&=-F_{\boldsymbol{Y}}(\boldsymbol{X}_{\alpha})+\lambda\|\boldsymbol{X}_{\alpha}\|_* \leq \\
	& \alpha [-F_{\boldsymbol{Y}}(\boldsymbol{X}_1)+\lambda\|\boldsymbol{X}_1\|_*] +(1-\alpha)[-F_{\boldsymbol{Y}}(\boldsymbol{X}_2)+\lambda\|\boldsymbol{X}_2\|_*] -\frac{1}{2}d\alpha(1-\alpha)\sum_{(i,j)\in\Omega} ((\boldsymbol{X}_1)_{ij}-(\boldsymbol{X}_2)_{ij})^2=\\
	& \alpha G(\boldsymbol{X}_1,\lambda) + (1-\alpha)G(\boldsymbol{X}_2,\lambda)-\frac{1}{2}d\alpha(1-\alpha)\sum_{(i,j)\in\Omega} ((\boldsymbol{X}_1)_{ij}-(\boldsymbol{X}_2)_{ij})^2= \\
	& G(\boldsymbol{X}_1,\lambda)-\frac{1}{2}d\alpha(1-\alpha)\sum_{(i,j)\in\Omega} ((\boldsymbol{X}_1)_{ij}-(\boldsymbol{X}_2)_{ij})^2.
	\end{align*}
	The last inequality is true since all minima of problem (\ref{p6}) are global minimums (a result of convexity), so $G(\boldsymbol{X},\lambda)$ is constant for $\boldsymbol{X}\in S^*(\lambda)$. One can write
	$$G(\boldsymbol{X}_1,\lambda)=G(\boldsymbol{X}_{\alpha},\lambda)\leq G(\boldsymbol{X}_1,\lambda)-\frac{1}{2}d\alpha(1-\alpha)\sum_{(i,j)\in\Omega} ((\boldsymbol{X}_1)_{ij}-(\boldsymbol{X}_2)_{ij})^2$$
	and because this inequality is true for $0\leq\alpha\leq 1$, we must have 
	$$\sum_{(i,j)\in\Omega} ((\boldsymbol{X}_1)_{ij}-(\boldsymbol{X}_2)_{ij})^2=0$$
	or equivalently, 
	$$(\boldsymbol{X}_1)_{ij}=(\boldsymbol{X}_2)_{ij}\quad \forall (i,j)\in\Omega.$$
\end{proof}
The following theorem, lets us know the appropriate range of values for $\lambda$.
\begin{thm}
If $\lambda\geq \lambda^*=|\Omega|$  then, $r(\lambda)\leq k$ under assumption (A).
\end{thm}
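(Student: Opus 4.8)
The plan is to fix an arbitrary minimizer $\boldsymbol{X}^{*}\in S^{*}(\lambda)$ and show directly that $\|\boldsymbol{X}^{*}\|_{*}\leq k$; since $\boldsymbol{X}^{*}$ is arbitrary, this gives $r(\lambda)\leq k$. The whole argument hinges on a single quantitative fact: the gradient of the log-likelihood is uniformly small in the norm dual to the trace norm, and the size of this bound turns out to be exactly $|\Omega|$, which is precisely why the threshold $\lambda^{*}=|\Omega|$ appears. Assumption (A) enters to guarantee that $\boldsymbol{X}_{0}$ is feasible for (\ref{p5}), i.e. $\|\boldsymbol{X}_{0}\|_{*}\leq k$, providing a concrete competitor against which to compare $\boldsymbol{X}^{*}$.

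First I would compute $\nabla F_{\boldsymbol{Y}}$. Differentiating (\ref{loglike}) entrywise and using $\Phi'=\Phi(1-\Phi)$, a short calculation gives, for $(i,j)\in\Omega$,
\[
\frac{\partial F_{\boldsymbol{Y}}}{\partial \boldsymbol{X}_{ij}}=\Phi(U_{ij}-\boldsymbol{X}_{ij})+\Phi(L_{ij}-\boldsymbol{X}_{ij})-1,
\]
and $0$ off $\Omega$. Since $\Phi$ takes values in $(0,1)$, every such entry lies in $(-1,1)$, and $\nabla F_{\boldsymbol{Y}}(\boldsymbol{X})$ is supported on the $|\Omega|$ observed positions. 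Writing $\|\cdot\|_{\mathrm{op}}$ for the operator (spectral) norm, which is dual to $\|\cdot\|_{*}$, I would then bound $\|\nabla F_{\boldsymbol{Y}}(\boldsymbol{X})\|_{\mathrm{op}}\leq\sum_{(i,j)\in\Omega}\big|\partial F_{\boldsymbol{Y}}/\partial\boldsymbol{X}_{ij}\big|<|\Omega|$ uniformly in $\boldsymbol{X}$, using the elementary estimate $\|\boldsymbol{A}\|_{\mathrm{op}}\leq\sum_{ij}|\boldsymbol{A}_{ij}|$. This is the step that pins the constant to $|\Omega|$.

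With this in hand I would close the argument through first-order optimality of $\boldsymbol{X}^{*}$ for the convex problem (\ref{p6}). Stationarity reads $\nabla F_{\boldsymbol{Y}}(\boldsymbol{X}^{*})\in\lambda\,\partial\|\boldsymbol{X}^{*}\|_{*}$, and since any subgradient $\boldsymbol{W}$ of the trace norm at $\boldsymbol{X}^{*}$ satisfies $\langle \boldsymbol{W},\boldsymbol{X}^{*}\rangle=\|\boldsymbol{X}^{*}\|_{*}$ (with $\langle\cdot,\cdot\rangle$ the Frobenius inner product), taking the inner product with $\boldsymbol{X}^{*}$ yields $\langle\nabla F_{\boldsymbol{Y}}(\boldsymbol{X}^{*}),\boldsymbol{X}^{*}\rangle=\lambda\|\boldsymbol{X}^{*}\|_{*}$. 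On the other hand, Hölder's inequality for the dual pair $(\|\cdot\|_{\mathrm{op}},\|\cdot\|_{*})$ together with the gradient bound gives $\langle\nabla F_{\boldsymbol{Y}}(\boldsymbol{X}^{*}),\boldsymbol{X}^{*}\rangle\leq|\Omega|\,\|\boldsymbol{X}^{*}\|_{*}\leq\lambda\|\boldsymbol{X}^{*}\|_{*}$. Comparing the two relations forces $\|\boldsymbol{X}^{*}\|_{*}=0$ whenever $\lambda\geq|\Omega|$, which in particular yields $\|\boldsymbol{X}^{*}\|_{*}\leq k$.

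I would also record the softer comparison route, as it is the one that genuinely leans on (A). From $G(\boldsymbol{X}^{*},\lambda)\leq G(\boldsymbol{X}_{0},\lambda)$ one obtains $\lambda(\|\boldsymbol{X}^{*}\|_{*}-\|\boldsymbol{X}_{0}\|_{*})\leq F_{\boldsymbol{Y}}(\boldsymbol{X}^{*})-F_{\boldsymbol{Y}}(\boldsymbol{X}_{0})\leq\langle\nabla F_{\boldsymbol{Y}}(\boldsymbol{X}_{0}),\boldsymbol{X}^{*}-\boldsymbol{X}_{0}\rangle\leq|\Omega|\,\|\boldsymbol{X}^{*}-\boldsymbol{X}_{0}\|_{*}$, using concavity of $F_{\boldsymbol{Y}}$ (the convexity of each $-\log f_{ij}$ established in Lemma \ref{equality}) and the gradient bound. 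The main obstacle lies exactly here: a naive triangle inequality $\|\boldsymbol{X}^{*}-\boldsymbol{X}_{0}\|_{*}\leq\|\boldsymbol{X}^{*}\|_{*}+\|\boldsymbol{X}_{0}\|_{*}$ is too lossy to conclude $\|\boldsymbol{X}^{*}\|_{*}\leq k$, so one would have to feed in the variational inequality $\langle\nabla F_{\boldsymbol{Y}}(\boldsymbol{X}_{0}),\boldsymbol{X}-\boldsymbol{X}_{0}\rangle\leq0$ for all $\boldsymbol{X}$ with $\|\boldsymbol{X}\|_{*}\leq k$, which follows from optimality of $\boldsymbol{X}_{0}$ in (\ref{p5}). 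Reconciling this cross term is the delicate point; the stationarity route of the previous paragraph sidesteps it entirely, so that is the argument I would ultimately write down.
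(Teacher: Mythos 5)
Your proposal is correct, but it takes a genuinely different route from the paper's proof, and the comparison is instructive. The paper argues by contradiction at the level of function values: supposing some minimizer has $\|\boldsymbol{X}^*(\lambda)\|_*=k+\epsilon$, it shrinks the singular values to construct a competitor $\boldsymbol{Z}$ with $\|\boldsymbol{Z}\|_*=k$ and $\|\boldsymbol{Z}-\boldsymbol{X}^*(\lambda)\|_F\leq\epsilon$, uses the $1$-Lipschitz property of $\log f_{ij}$ to get $F_{\boldsymbol{Y}}(\boldsymbol{X}^*(\lambda))-F_{\boldsymbol{Y}}(\boldsymbol{Z})\leq|\Omega|\epsilon$, plays the optimality of $\boldsymbol{X}_0$ for (\ref{p5}) (this is where assumption (A) enters) against the optimality of $\boldsymbol{X}^*(\lambda)$ for (\ref{p6}) to force $\boldsymbol{X}_0\in S^*(\lambda)$, and then invokes Lemma \ref{equality} to reach a contradiction. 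You instead argue through first-order stationarity: your derivative computation is right (writing $a=\Phi(U_{ij}-x)$, $b=\Phi(L_{ij}-x)$, one has $f_{ij}'=(a-b)(a+b-1)=f_{ij}\cdot(a+b-1)$, so each entry of $\nabla F_{\boldsymbol{Y}}$ lies in $(-1,1)$ and is supported on $\Omega$), hence $\|\nabla F_{\boldsymbol{Y}}(\boldsymbol{X})\|_{\mathrm{op}}<|\Omega|$ at every finite $\boldsymbol{X}$; since any nonzero minimizer of (\ref{p6}) must satisfy $\nabla F_{\boldsymbol{Y}}(\boldsymbol{X}^*)\in\lambda\,\partial\|\boldsymbol{X}^*\|_*$ and therefore $\|\nabla F_{\boldsymbol{Y}}(\boldsymbol{X}^*)\|_{\mathrm{op}}=\lambda$, no nonzero minimizer can exist when $\lambda\geq|\Omega|$, while $\boldsymbol{X}^*=\boldsymbol{0}$ does satisfy the stationarity condition because $\|\nabla F_{\boldsymbol{Y}}(\boldsymbol{0})\|_{\mathrm{op}}<|\Omega|\leq\lambda$. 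What each approach buys: yours is shorter, needs neither assumption (A) nor Lemma \ref{equality}, and proves a strictly stronger statement; but that stronger statement exposes a degeneracy that the paper's argument leaves hidden, namely that for every $\lambda$ in the regime covered by the theorem the solution set of (\ref{p6}) is exactly $\{\boldsymbol{0}\}$, so the guarantee $r(\lambda)\leq k$ holds vacuously and the estimator is useless for recovery --- the threshold $\lambda^*=|\Omega|$ sits above the usual ``null-solution'' threshold $\|\nabla F_{\boldsymbol{Y}}(\boldsymbol{0})\|_{\mathrm{op}}$ of lasso-type problems. Two small remarks: your phrase ``uniformly small'' is imprecise, since the bound $\|\nabla F_{\boldsymbol{Y}}(\boldsymbol{X})\|_{\mathrm{op}}<|\Omega|$ is pointwise strict but its supremum over $\boldsymbol{X}$ equals $|\Omega|$ (this is harmless, as you only need the bound at the fixed point $\boldsymbol{X}^*$); and you were right to discard your second ``comparison'' route, which as written still contains the unresolved cross-term gap you yourself identified, whereas the stationarity route is complete.
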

\begin{proof}
Assume $\lambda\geq \lambda^*$, and $\boldsymbol{X}^*(\lambda)\in S^*(\lambda)$ such that $\|\boldsymbol{X}^*(\lambda)\|_*=k+\epsilon$. We prove that this will be impossible by contradiction. Let $\boldsymbol{X}^*(\lambda)=\boldsymbol{U}\text{diag}(\sigma_1,\sigma_2,...,\sigma_q)\boldsymbol{V}^T$ be the SVD decomposition of $\boldsymbol{X}^*(\lambda)$, where $q=\min(m,n)$ and the singular values are sorted as $\sigma_1\geq ... \geq \sigma_q$. We choose the values $\epsilon_1, .. , \epsilon_k$ such that first, $0\leq \epsilon_i \leq \sigma_i, \forall 1\leq i \leq k$ and second, $\sum_{i=1}^{q}\epsilon_i=\epsilon$. It is worth noting that this is possible since $\|\boldsymbol{X}^*(\lambda)\|_*=\sum_{i=1}^{q}\sigma_i=k+\epsilon>\epsilon$. We define $\boldsymbol{Z}$ as follows:
\begin{equation}
\boldsymbol{Z}=\boldsymbol{U}\text{diag}(\sigma_1-\epsilon_1,...,\sigma_q-\epsilon_q)\boldsymbol{V}^T.
\end{equation}
Hence, $\|\boldsymbol{Z}\|_*=\sum_{i=1}^{q}\sigma_i-\sum_{i=1}^{q}\epsilon_i=k$.\\

The matrix $\boldsymbol{Z}$ holds in the constraint of the problem in (\ref{p5}). By using the minimizer property and our assumptions, it follows that:
$F_{\boldsymbol{Y}}(\boldsymbol{Z}) \leq F_{\boldsymbol{Y}}(\boldsymbol{X}_0)$. In addition, one can write
\begin{align}
\|\boldsymbol{Z}-\boldsymbol{X}^*(\lambda)\|_F^2  = &
\|\boldsymbol{U}\text{diag}(\sigma_1,...\sigma_q)\boldsymbol{V}^T-\boldsymbol{U}\text{diag}(\sigma_1-\epsilon_1,...,\sigma_q-\epsilon_q)\boldsymbol{V}^T||_F^2=\\
& \|\boldsymbol{U}\text{diag}(\epsilon_1,...,\epsilon_q)\boldsymbol{V}^T\|
_F^2=\text{tr}(\boldsymbol{USV}^T\boldsymbol{VS}^T\boldsymbol{U}^T)=\text{tr}(\boldsymbol{USS}^T\boldsymbol{U}^T)= \\ & \text{tr}(\boldsymbol{U}^T\boldsymbol{USS}^T)=\text{tr}(\boldsymbol{SS}^T)=\sum_{i=1}^{q}\epsilon_i^2=\epsilon^2
\end{align}
We have shown that $\|\boldsymbol{Z}-\boldsymbol{X}^*(\lambda)\|_F\leq \epsilon$. Thus, for each $(i,j)$ we have $|\boldsymbol{Z}_{ij}-\boldsymbol{X}^*_{ij}(\lambda)|\leq\epsilon$.
It is worth noting that $\log(f_{ij}(.))$ is Lipschitz continuous with coefficient $1$. Thus, 
\begin{equation}
|\log f_{ij}(\boldsymbol{Z}_{ij})-\log f_{ij}(\boldsymbol{X}^*_{ij}(\lambda))| \leq |\boldsymbol{Z}_{ij}-\boldsymbol{X}^*_{ij}(\lambda)|\leq \epsilon
\end{equation}
Using the Triangle inequality, we have:
\begin{equation}\label{lips}
F_{\boldsymbol{Y}}(\boldsymbol{X}^*(\lambda))-F_{\boldsymbol{Y}}(\boldsymbol{Z})\leq \sum_{(i,j)\in \Omega}|\log f_{ij}(\boldsymbol{Z}_{ij})-\log f_{ij}(\boldsymbol{X}_{ij}^*(\lambda))|\leq |\Omega| \epsilon
\end{equation}
Using the minimizer concept for problem (\ref{p6}), we have:
\begin{equation}\label{eq15}
\lambda\|\boldsymbol{X}_0\|_*-F_{\boldsymbol{Y}}(\boldsymbol{X}_0)-[\lambda\|\boldsymbol{X}^*(\lambda)\|_*-F_{\boldsymbol{Y}}(\boldsymbol{X}^*(\lambda))]\geq0
\end{equation}
On the other hand,
\begin{align}\label{eqcontr}
\nonumber \lambda\|\boldsymbol{X}_0\|_*-F_{\boldsymbol{Y}}(\boldsymbol{X}_0)-[\lambda\|\boldsymbol{X}^*(\lambda)\|_*-F_{\boldsymbol{Y}}(\boldsymbol{X}^*(\lambda))]\leq\lambda k-\lambda(k+\epsilon)+F_{\boldsymbol{Y}}(\boldsymbol{X}^*(\lambda))-F_{\boldsymbol{Y}}(\boldsymbol{X}_0)\\
\nonumber=-\epsilon\lambda+F_{\boldsymbol{Y}}(\boldsymbol{X}^*(\lambda))-F_{\boldsymbol{Y}}(\boldsymbol{X}_0)\\
\nonumber \leq -\epsilon\lambda+F_{\boldsymbol{Y}}(\boldsymbol{X}^*(\lambda))-F_{\boldsymbol{Y}}(\boldsymbol{Z})\\
\leq \epsilon(|\Omega|-\lambda)=\epsilon(\lambda^*-\lambda)\leq0
\end{align}
The first inequality is true since $\boldsymbol{X}_0$ is a feasible point of the problem \ref{p5}, i.e., $\|\boldsymbol{X}_0\|_*\leq k$.
The second inequality follows from the minimizer property definition (problem \ref{p5}), and the third inequality follows from (\ref{lips}). This inequality proves $\boldsymbol{X}_0\in S^*(\lambda)$ because $G(\boldsymbol{X}_0,\lambda)\leq G(\boldsymbol{X}^*(\lambda),\lambda)$. However, this is a contradiction because lemma \ref{equality} implies $F_{\boldsymbol{Y}}(\boldsymbol{X}_0)=F_{\boldsymbol{Y}}(\boldsymbol{X}^*(\lambda))$ while $\|\boldsymbol{X}_0\|_*=k<k+\epsilon=\|\boldsymbol{X}^*(\lambda)\|_*$, resulting in $G(\boldsymbol{X}_0,\lambda)<G(\boldsymbol{X}^*(\lambda),\lambda)$.
\end{proof}

\section{Numerical Experiments}\label{NE}
In this section, we provide numerical experiments for two scenarios:
\begin{itemize}
\item 
Synthetic Dataset
\item 
Real Dataset
\end{itemize}
We also compare the performance of our proposed approach to two state-of-the-art methods as explained hereunder:
\begin{itemize}
\item
\textbf{SPARFA-Lite}
In \cite{lan2014quantized}, the authors propose a quantized matrix completion method for personalized learning. This method is considered as a robust method in quantized matrix completion and the details of the algorithm have been explained earlier in our Introduction \ref{Intro}. This method is similar to the version \textbf{Q-MC} presented in \cite{lan2014matrix}.
They apply a projected gradient based approach in order to minimize the constrained log-likelihood function introduced in \ref{loglike}. The projection step assumes the matrix is included in a convex ball determined by the tuning parameter $\lambda$ (endorsing trace norm reduction).  
	\item \textbf{Approximate Projected Gradient Method} (\cite{bhaskar})
	This method is proposed in \cite{bhaskar}. It is worth noting that we do not confine our problem model to any hard rank constraints. As explained in Section \ref{proposed}, the matrix size we take 
\end{itemize}

\subsection{Synthetic Dataset}
We have generated two orthonormal matrices and a diagonal matrix containing singular values (with uniform distribution on $[0,1]$) determining the rank of the desired matrix. Now, multiplying the three components of the SVD, the initial synthetic matrix is resulted. Since SPARFA-Lite algorithm considers integer values, we also desire to compare the results in the similar platform. Thus, we normalize the initial synthesized matrix onto the interval $[1,n]$, where $n$ denotes the number of quantization levels. This normalization preserves the low-rank property (the rank may be only increased by $1$ which is ignorable in our settings). Next, we map the values of this matrix entries onto integer bounds. We apply a random mask to deliberately induce missing pattern on the data. In discussion subsection \ref{Discuss}, we elaborate the results achieved as well as the missing percentage, rank values, and the dimensions selected for the synthetic data. 
\subsection{MovieLens (100k) Dataset}
In \cite{harper2016movielens} the Movielens dataset is provided and clearly introduced. The detailed explanations regarding this dataset can be found in 
\cite{harper2016movielens} and \cite{Movielens}. Here we briefly discuss this dataset properties.
It contains $100,000$ ratings (instances) $(1-5)$ from $943$ users on $1682$ movies, where each user has rated at least $20$ movies. Our purpose is to predict the ratings which have not been recorded or completed by users.
We assume this rating matrix is a quantized version of an oracle genuine low-rank matrix and recover it using our algorithm. Then, a final quantization can be applied to predict the missing ratings.\\
\subsection{Discussion on Simulation Results}\label{Discuss}
The dimensions of the synthesized matrix are chosen to be $250\times350$. Two missing percentages of $10\%$ and $15\%$ are considered for our simulations. The number of quantization levels are also assumed to be equal to $10$ and $15$, respectively.
In Figure \ref{f1} and Figure \ref{f2}, we observe the relative error (accuracy) in recovery and also the computational time in seconds measured on an @Intel Core i7 6700 HQ 16 GB RAM system using MATLAB \textsuperscript{\textregistered}. It is worth noting that our method outperforms the two state-of the-art methods in terms of relative error in recovery as depicted in Figure \ref{f1}. Figure \ref{f2}, shows that the computational complexity of our method is comparable to those of the state-of-the-art methods except that for the increased rank, the computational time required by our method increases with a higher slope in comparison to the two others. 
\begin{figure*}
\centering
	\subfloat[]{\includegraphics[width=0.5\linewidth]{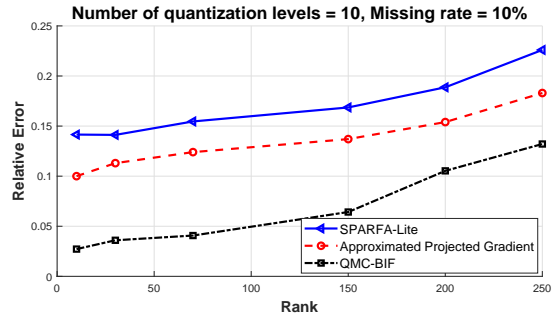}}\quad
	\subfloat[]{\includegraphics[width=0.5\linewidth]{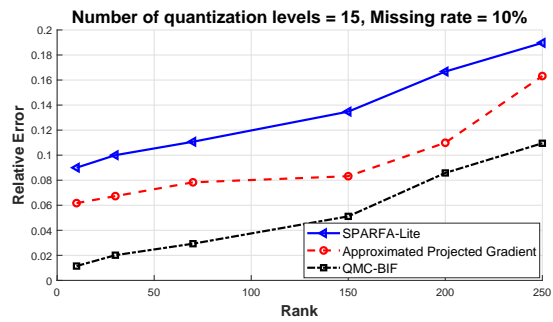}}\\
	\subfloat[]{\includegraphics[width=0.5\linewidth]{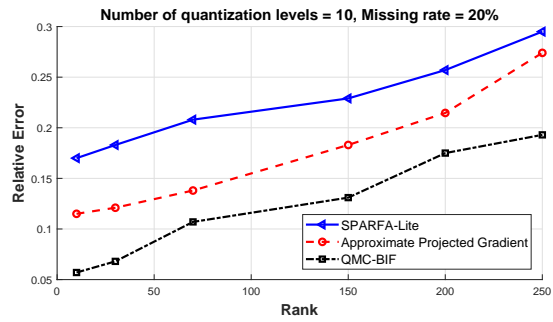}}\quad
	\subfloat[]{\includegraphics[width=0.5\linewidth]{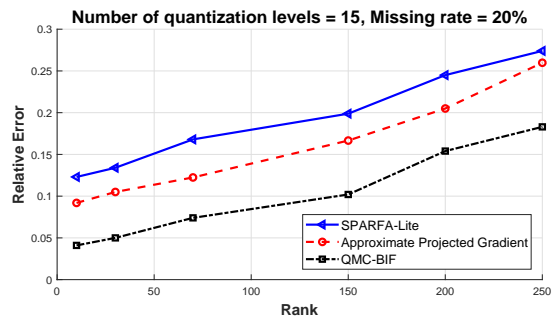}}
	\caption{Relative error values for different methods, quantization levels (QMC-BIF is our proposed method), and missing rates.}\label{f1}
\end{figure*}
\begin{figure*}
\centering
	\subfloat[]{\includegraphics[width=0.5\linewidth]{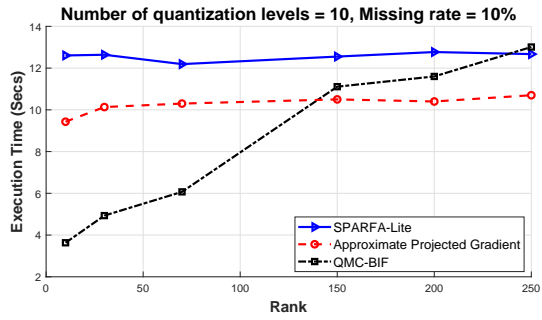}}\quad
	\subfloat[]{\includegraphics[width=0.5\linewidth]{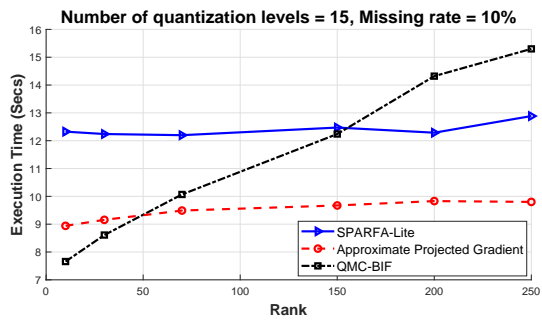}}\\
	\subfloat[]{\includegraphics[width=0.5\linewidth]{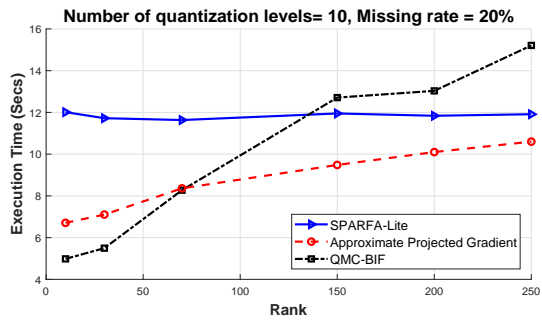}}\quad
	\subfloat[]{\includegraphics[width=0.5\linewidth]{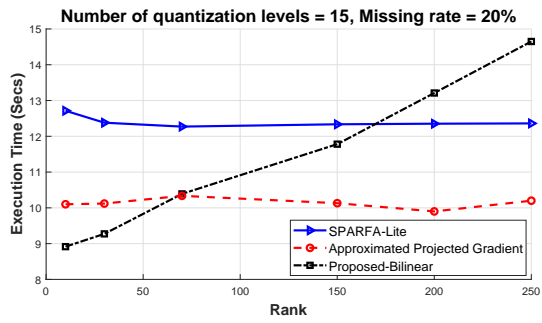}}
	\caption{Runtime values for different methods (QMC-BIF is our proposed method), quantization levels, and missing rates.}\label{f2}
\end{figure*}
\begin{table}\label{T1}
\caption{RMSE Values of Different Methods on MovieLens100K Dataset}
\begin{center}
\begin{tabular}{ |c|c|c|c| }
\hline
 Missing & QMC-BIF(Proposed) & Approximate Projected Gradient & SPARFA-Lite\\ 
 \hline
 $10\%$ & $0.943$ & $1.180$ & $1.316$\\ 
 \hline
 $20\%$ &  $1.375$ & $1.703$ & $1.825$\\ 
 \hline
\end{tabular}
\end{center}
\end{table}
\begin{table}\label{T2}
\caption{Computational Runtime of Different Methods on MovieLens100K Dataset in Secs}
\begin{center}
\begin{tabular}{ |c|c|c|c| }
\hline
 Missing & QMC-BIF(Proposed) & Approximate Projected Gradient   & SPARFA-Lite\\ 
 \hline
 $10\%$ & $612$ & $573$ & $635$\\ 
 \hline
 $20\%$ &  $726$ & $582$ & $647$\\ 
 \hline
\end{tabular}
\end{center}
\end{table}
The Tables \ref{T1} and \ref{T2} illustrates the runtime and RMSE values of the thrree methods (including ours) on the \textit{MovieLens100K} dataset:
Again we observe the outperformance of our accuracy compared to the two other methods. However, the runtime of our method increase more significantly when the rank is increased or more observations are available.
\section{Conclusion}\label{conc}
In this paper, we propose a novel approach in quantized matrix completion using bilinear factorization and ALM method to minimize a penalized log-likelihood convex function. We have established theoretical convergence guarantees for our approach and also claim that our proposed method is not sensitive to an exact initial rank estimation. The numerical experiments also illustrate the superior accuracy of our method compared to state-of-the-art methods as well as reasonable computational complexity.
\section*{References}
\bibliographystyle{elsarticle-num} 
\bibliography{refQMC.bib}
\end{document}